\newtheorem{theorem}{Theorem}
\newtheorem{proof}{Proof}
\begin{document}
%
\title{Enhanced High-Dimensional Data Visualization through Adaptive Multi-Scale Manifold Embedding}
%
%
%

\author{Tianhao Ni\textsuperscript{*}, Bingjie Li\textsuperscript{*}, and Zhigang Yao
\thanks{*These authors contributed equally to this work.}
\thanks{This work was supported by the Singapore Ministry of
Education Tier 2 grant A-8001562-00-00 and the Tier 1 grant A-8002931-00-00 at the National
University of Singapore.} 
\thanks{T. Ni is with the School of Mathematical Sciences, Zhejiang University, Hangzhou, China (e-mail: thni@zju.edu.cn).}
\thanks{B. Li and Z. Yao are with the Department of Statistics and Data Science, National University of Singapore (e-mail: bjlistat@nus.edu.sg and zhigang.yao@nus.edu.sg).}}

%
%

\markboth{Journal of \LaTeX\ Class Files,~Vol.~14, No.~8, August~2015}%
{Shell \MakeLowercase{\textit{et al.}}: Bare Demo of IEEEtran.cls for IEEE Journals}
%



\maketitle

\begin{abstract}
To address the dual challenges of the curse of dimensionality and the difficulty in separating intra-cluster and inter-cluster structures in high-dimensional manifold embedding, we proposes an Adaptive Multi-Scale Manifold Embedding (AMSME) algorithm. By introducing ordinal distance to replace traditional Euclidean distances, we theoretically demonstrate that ordinal distance overcomes the constraints of the curse of dimensionality in high-dimensional spaces, effectively distinguishing heterogeneous samples. We design an adaptive neighborhood adjustment method to construct similarity graphs that simultaneously balance intra-cluster compactness and inter-cluster separability. Furthermore, we develop a two-stage embedding framework: the first stage achieves preliminary cluster separation while preserving connectivity between structurally similar clusters via the similarity graph, and the second stage enhances inter-cluster separation through a label-driven distance reweighting. Experimental results demonstrate that AMSME significantly preserves intra-cluster topological structures and improves inter-cluster separation on real-world datasets. Additionally, leveraging its multi-resolution analysis capability, AMSME discovers novel neuronal subtypes in the mouse lumbar dorsal root ganglion scRNA-seq dataset, with marker gene analysis revealing their distinct biological roles.
\end{abstract}

\begin{IEEEkeywords}
Manifold Embedding, Scale-invariant Metric, Curse of Dimension, Adaptive Neighborhood Identification, Visualization, Multi-Resolution Analysis. 
\end{IEEEkeywords}

\IEEEpeerreviewmaketitle

\section{Introduction}
Manifold embedding has emerged as a pivotal tool in scientific research, encompassing data-driven disciplines such as machine learning \cite{mcconville2021n2d,sainburg2021parametric} and computational social science \cite{sharifian2022analysing}, as well as traditional domains including physics \cite{haggar2024reconsidering}, chemistry \cite{trozzi2021umap}, and biology \cite{becht2019dimensionality,kobak2019art}. Researchers frequently encounter datasets comprising thousands or even millions of variables, necessitating methodologies to extract core patterns, identify clusters or submanifolds, and generate interpretable low-dimensional representations. These representations facilitate exploratory data analysis, hypothesis generation, anomaly detection, and the intuitive communication of complex results.

Over the past two decades, manifold embedding methods have made substantial progress. Early linear techniques, such as Principal Component Analysis (PCA, \cite{hotelling1933analysis}), were introduced in the last century. In contrast, more sophisticated manifold learning frameworks gained prominence in the 2000s and 2010s. Techniques like Isomap \cite{tenenbaum2000global}, Laplacian Eigenmaps (LE, \cite{belkin2001laplacian}), Locally Linear Embedding (LLE, \cite{roweis2000nonlinear}), t-SNE \cite{van2008visualizing}, and more recent approaches such as UMAP \cite{mcinnes2018umap} and PACMAP \cite{wang2021understanding} have continuously enhanced the ability to preserve high-dimensional relationships in low-dimensional spaces. Additionally, manifold fitting techniques \cite{yao2023manifold,fefferman2023fitting,2019Manifold} have garnered attention for their capacity to reconstruct underlying manifold structures more effectively and handle noisy, non-uniform data distributions with greater robustness.

Despite their widespread adoption, existing manifold learning methods face significant challenges when data exhibit complex characteristics such as noise, high intra-cluster variability, or non-uniform density across different regions of the manifold. For instance, t-SNE and UMAP sometimes fail to separate distinct clusters due to inappropriate neighborhood scale settings \cite{van2018s}. Moreover, many traditional algorithms rely on absolute distances, which are highly sensitive in high-dimensional spaces and often lose their intuitive meaning due to the curse of dimensionality \cite{hammer1962adaptive}, making it difficult to learn the true structure of high-dimensional manifolds.

\begin{figure*}[htbp]
    \centering
\includegraphics[width=1\linewidth]{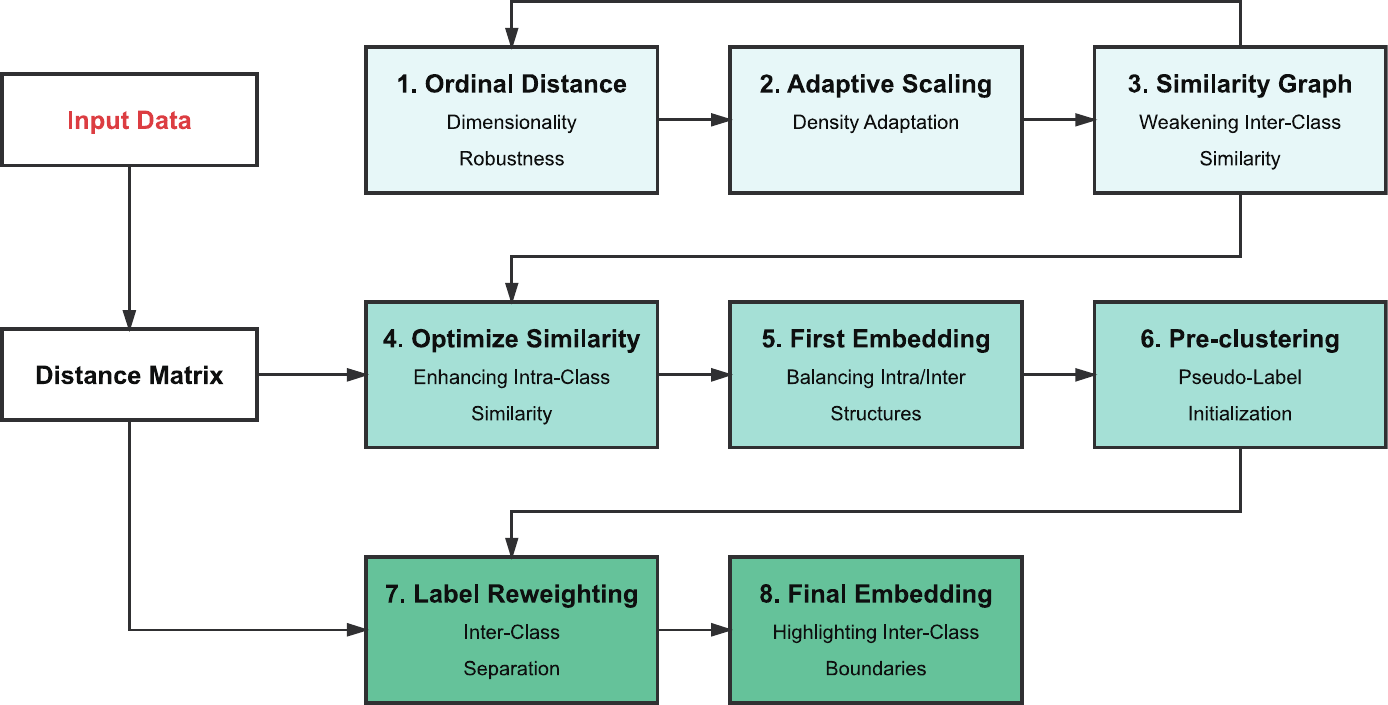} 
    \caption{Overview of the AMSME Framework (see Algorithm \ref{alg:AMSME} for detail). First, AMSME acquires the input data's distance matrix, then constructs an ordinal distance to overcome the curse of dimensionality. Subsequently, it adaptively selects neighborhood sizes based on density variations and builds a similarity graph to weaken inter-cluster similarities while enhancing intra-cluster cohesion. Based on this graph, AMSME performs the first visualization and obtains pseudo-labels via pre-clustering. Using these labels, it amplifies inter-cluster discrepancies in the distance matrix and conducts a second visualization with the updated matrix to achieve distinct inter-cluster separation.}
    \label{fig:workflow}
\end{figure*}

We propose a two-stage nonlinear manifold learning framework, termed Adaptive Multi-Scale Manifold Embedding (AMSME), to address these limitations. Our method advances manifold learning principles in the following ways:
\begin{itemize}
\item \textbf{Robustness via ordinal distances.}
AMSME replaces absolute distances with ordinal rankings, which theoretically and empirically demonstrate stable differentiation between heterogeneous and homogeneous samples in high dimensions.
\item \textbf{Adaptive local scaling.}
We adjust the effective neighborhood size of each sample point based on gap between ordinal distance. For high-density samples in different clusters, a larger neighborhood width effectively ensures strong intra-cluster connectivity while avoiding the introduction of inter-cluster connections. For samples located at the cluster center and cluster boundary, respectively, we adopt a differentiated strategy: assigning a smaller neighborhood width to the cluster center samples to minimize inter-cluster connections, while allocating a larger neighborhood width to the boundary samples to prevent them from being misidentified as outliers. This approach ensures the quality of manifold embedding is effectively preserved. Such an adaptive mechanism enhances the discernibility of global structures while preserving local structures.
\item \textbf{Multi-stage embedding.}
We propose a two-stage manifold embedding framework, which generates results customized to different visualization objectives. In the first stage, the embedding focuses on preserving intra-cluster structures. Although the distances between different clusters are relatively small, their boundaries remain clearly distinguishable. In the second stage, we leverage the label information from the first stage to drive the final embedding, further optimizing inter-cluster separability. This significantly reduces inter-cluster overlap while simultaneously strengthening the boundaries between clusters.
\end{itemize}

The remainder of this paper is organized as follows: Section \ref{sec:method} elaborates on the specific framework of AMSME, including the definition and theoretical analysis of ordinal distance, adaptive neighborhood selection, similarity graph construction, and two-stage embedding. Section \ref{sec:experiments} demonstrates the effectiveness of the proposed method by comparing AMSME with standard t-SNE \cite{van2008visualizing}, UMAP \cite{mcinnes2018umap}, and PaCMAP \cite{wang2021understanding} on real datasets. Finally, Section \ref{sec:conclusion} discusses the feasibility of adaptive multi-scale embedding in other types of data analysis.

\section{Methodology}
\label{sec:method}
Figure \ref{fig:workflow} illustrates the workflow of our proposed Adaptive Multi-Scale Manifold Embedding (AMSME) framework, which comprises five key steps: (1) Construction of the ordinal distance matrix, (2) Adaptive neighborhood identification and similarity graph construction, (3) First-stage embedding and clustering, (4) Label-driven reweighting and final embedding.

\subsection{Notations}
Let \(X = [x_1, \dots, x_n] \in \mathbb{R}^{d \times n}\) denote the dataset comprising \(n\) samples of dimensionality \(d\), and let \(D \in \mathbb{R}^{n \times n}\) represent its corresponding Euclidean distance matrix and $I_d$ denote the d-dimensional identity matrix. We denote the multivariate normal distribution parameterized by mean \(\mu\) and covariance \(\Sigma\) as \(\mathcal{N}(\mu, \Sigma)\), and use \(\mathbb{P}(\cdot)\), \(\mathbb{E}(\cdot)\), and \(\text{Var}(\cdot)\) to represent the probability measure, expectation, and variance operators, respectively. The asymptotic order notation \(\mathcal{O}(\cdot)\) quantifies the growth rates of functions. The embedding map \(\mathcal{F}: D \mapsto Y \in \mathbb{R}^{k \times n}\) transforms pairwise distance matrices into k-dimension representations where $k\ll d$, with UMAP as the default method. For a matrix \(A\), \(A_{i,:}\) and \(A_{:,j}\) denote its \(i\)-th row and \(j\)-th column vectors, respectively, and \(A_{i,j}\) represents the \((i,j)\)-th element. Additionally, \(\|\cdot\|\) denotes the Euclidean norm of a vector.

\subsection{Ordinal Distance}
In high-dimensional data analysis, the Euclidean distance is significantly affected by the curse of dimensionality, making it unreliable for measuring inter-cluster differences. However, we discovered that the relative magnitude of distances can effectively distinguish clusters in high-dimensional space, as demonstrated by Theorem \ref{thm:curse}.

\begin{theorem}  
\label{thm:curse}  
Let \(x_i, x_j \sim \mathcal{N}(\mu_1, \sigma_1^2 I_d)\) be independently and identically distributed for \(i \neq j\), and \(y_k \sim \mathcal{N}(\mu_2, \sigma_2^2 I_d)\). If the global separability condition \(\sigma_2^2 - \sigma_1^2 + \|\mu_1 - \mu_2\|^2 > 0\) holds, define the intra-cluster squared distance \(d_{ij} = \|x_i - x_j\|^2\) and the inter-cluster squared distance \(d_{ik} = \|x_i - y_k\|^2\). Then, as the dimension \(d \to \infty\), the probability that the inter-cluster distance exceeds the intra-cluster distance converges to 1:  
\[  
\lim_{d \to \infty} \mathbb{P}(d_{ik} > d_{ij}) = 1-\lim\limits_{d\to\infty}\mathcal{O}(d^{-1}) = 1.  
\]  
\end{theorem}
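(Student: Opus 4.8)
The plan is to reduce the event $\{d_{ik} > d_{ij}\}$ to a one-dimensional concentration statement about the gap $\Delta_d := d_{ik} - d_{ij}$ and then track its mean and variance as $d \to \infty$. First I would exploit the product structure of the Gaussians: since the coordinates of $x_i, x_j, y_k$ are mutually independent, I would write $\Delta_d = \sum_{l=1}^{d} T_l$ with $T_l := (x_{il} - y_{kl})^2 - (x_{il} - x_{jl})^2$, and note that the $T_l$ are \emph{independent across} $l$, because the only shared random variable, $x_{il}$, is confined to a single coordinate. Using the factorization $T_l = (x_{jl} - y_{kl})(2x_{il} - x_{jl} - y_{kl})$ (or expanding directly), the elementary moments of a normal variable give $\mathbb{E}[T_l] = (\mu_{1,l}-\mu_{2,l})^2 + \sigma_2^2 - \sigma_1^2$, so that
\begin{equation}
\mathbb{E}[\Delta_d] = (\sigma_2^2 - \sigma_1^2)\,d + \|\mu_1 - \mu_2\|^2 .
\end{equation}
This is precisely the separability quantity, and the hypothesis $\sigma_2^2 - \sigma_1^2 + \|\mu_1-\mu_2\|^2 > 0$ is exactly what makes the expected gap positive.

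Next I would control the fluctuations. Independence of the $T_l$ removes all cross-covariance terms, so $\mathrm{Var}(\Delta_d) = \sum_{l=1}^{d}\mathrm{Var}(T_l)$, where each $\mathrm{Var}(T_l)$ is a fixed polynomial in $\sigma_1^2$, $\sigma_2^2$ and $(\mu_{1,l}-\mu_{2,l})^2$ obtained from the fourth moments of the Gaussians; summing gives $\mathrm{Var}(\Delta_d) = \mathcal{O}(d)$, a standard deviation of order $\sqrt{d}$. With an expected gap growing linearly and fluctuations of order $\sqrt{d}$, a second-moment bound (Chebyshev, or its one-sided Cantelli form) yields
\begin{equation}
\mathbb{P}(d_{ik} \le d_{ij}) = \mathbb{P}(\Delta_d \le 0) \le \frac{\mathrm{Var}(\Delta_d)}{\big(\mathbb{E}[\Delta_d]\big)^2} = \frac{\mathcal{O}(d)}{\Theta(d^2)} = \mathcal{O}(d^{-1}),
\end{equation}
which is the advertised rate; letting $d \to \infty$ gives the limit $1$. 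I would deliberately stop at the second moment rather than pursue a sharper sub-exponential tail, since the statement only promises $\mathcal{O}(d^{-1})$ decay, which Chebyshev already supplies.

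The step I expect to be the main obstacle is matching the scaling so that the Chebyshev ratio actually collapses to $\mathcal{O}(d^{-1})$ instead of diverging. Everything hinges on $\mathbb{E}[\Delta_d]$ being of exact order $d$: when $\sigma_2^2 > \sigma_1^2$ the variance gap alone supplies this linear growth, but when $\sigma_2^2 = \sigma_1^2$ the positivity rests entirely on $\|\mu_1 - \mu_2\|^2$, so I would have to make explicit that the mean separation itself scales with the dimension (i.e. $\|\mu_1-\mu_2\|^2 = \Theta(d)$) for the displayed rate to be legitimate; without such a scaling the second-moment estimate only guarantees convergence, not the $d^{-1}$ speed. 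The remaining work is pure bookkeeping: evaluating $\mathrm{Var}(T_l)$ from the normal fourth moments and verifying that its $\mu$-dependent part is dominated by $\big(\mathbb{E}[\Delta_d]\big)^2$. Neither is conceptually deep, but both must be carried out carefully to certify the claimed $\mathcal{O}(d^{-1})$ behavior.
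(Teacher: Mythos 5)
Your proposal is correct, and its skeleton is the same as the paper's: form the gap $Z = d_{ik} - d_{ij}$, show its expectation grows linearly in $d$ under the separability condition, show its variance is $\mathcal{O}(d)$, and close with a second-moment (Chebyshev/Cantelli) bound giving $\mathbb{P}(Z \le 0) = \mathcal{O}(d^{-1})$. Where you genuinely diverge is in how the variance is controlled. The paper works with the marginal laws — $d_{ij} \sim 2\sigma_1^2\chi^2(d)$ and $d_{ik}$ a scaled non-central chi-squared — and, because $d_{ij}$ and $d_{ik}$ are dependent through the shared $x_i$, it must absorb the cross term via $2\,\mathrm{Cov}(d_{ij},d_{ik}) \le \mathrm{Var}(d_{ij}) + \mathrm{Var}(d_{ik})$, losing a factor of $2$. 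Your coordinate-wise decomposition $Z = \sum_{l=1}^d T_l$ dissolves that dependence entirely: each $T_l$ involves only the $l$-th coordinates of $x_i, x_j, y_k$, so the $T_l$ are exactly independent and $\mathrm{Var}(Z) = \sum_l \mathrm{Var}(T_l)$ with no covariance bookkeeping at all. Both routes give $\mathrm{Var}(Z) = \mathcal{O}(d)$, which is all the argument needs, but yours is more elementary (fourth moments of one-dimensional Gaussians) and avoids invoking non-central chi-squared distribution facts.

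Your caveat about the scaling of $\|\mu_1-\mu_2\|^2$ is also well taken, and it in fact exposes a looseness in the paper's own computation. For fixed $\mu_1,\mu_2 \in \mathbb{R}^d$ the correct mean is $\mathbb{E}[Z] = d(\sigma_2^2-\sigma_1^2) + \|\mu_1-\mu_2\|^2$ — your formula — whereas the paper writes $\mathbb{E}[Z] = d\left(\sigma_2^2-\sigma_1^2+\|\mu_1-\mu_2\|^2\right)$, which silently treats the mean separation as accruing per coordinate, i.e.\ $\|\mu_1-\mu_2\|^2 = \Theta(d)$; the same extra factor of $d$ appears in its non-centrality parameter and in its variance formula for $d_{ik}$. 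Under that per-coordinate reading the stated rate follows; under the literal reading, if $\sigma_2^2 \le \sigma_1^2$ and $\|\mu_1-\mu_2\|$ is fixed, the expected gap stays bounded (or tends to $-\infty$) while the fluctuations are of order $\sqrt{d}$, and the conclusion itself fails — not merely the $d^{-1}$ rate. So the hypothesis you propose to make explicit is not optional bookkeeping; it is precisely what makes the theorem true as stated, and flagging it is something the paper's proof does not do.
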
  

\begin{proof}
For any pair \((x_i, x_j)\), the difference vector \(x_i - x_j\) follows a zero-mean Gaussian distribution as 
\[
x_i - x_j \sim \mathcal{N}(0, 2\sigma_1^2 I_d).
\]  
The squared intra-cluster distance \(d_{ij} = \|x_i - x_j\|^2\) is therefore a sum of \(d\) independent squared Gaussian variables, yielding a scaled chi-squared distribution
\[
d_{ij} \sim 2\sigma_1^2 \chi^2(d),
\]  
with mean \(\mathbb{E}[d_{ij}] = 2d\sigma_1^2\) and variance \(\mathrm{Var}(d_{ij}) = 8d\sigma_1^4\).  

For inter-cluster distances, the difference vector \(x_i - y_k\) combines the statistical properties of both classes. Since \(x_i\) and \(y_k\) are independent, we have
\[
x_i - y_k \sim \mathcal{N}\left(\mu_1 - \mu_2, (\sigma_1^2 + \sigma_2^2)I_d\right).
\]  
The squared inter-cluster distance \(d_{ik} = \|x_i - y_k\|^2\) thus follows a non-central chi-squared distribution
\[
d_{ik} \sim (\sigma_1^2 + \sigma_2^2) \chi^2\left(d, \lambda = \frac{d\|\mu_1 - \mu_2\|^2}{\sigma_1^2 + \sigma_2^2}\right),
\]  
where \(\lambda\) is the non-centrality parameter. Its mean and variance are
\begin{align*}
    \mathbb{E}[d_{ik}] &= d\left(\sigma_1^2 + \sigma_2^2 + \|\mu_1 - \mu_2\|^2\right),\\
    \mathrm{Var}(d_{ik}) &= 4d\|\mu_1 - \mu_2\|^2(\sigma_1^2 + \sigma_2^2) + 2d(\sigma_1^2 + \sigma_2^2)^2.
\end{align*} 
The random variable \(Z\) measures the gap between inter-cluster and intra-cluster distances. Then the expectation of $Z$ is 
\[
\mathbb{E}[Z] = \mathbb{E}[d_{ik}] - \mathbb{E}[d_{ij}] = d\left(\sigma_2^2 - \sigma_1^2 + \|\mu_1 - \mu_2\|^2\right).
\]  
The variance of \(Z\) combines contributions from both distances
\begin{align*}
  \mathrm{Var}(Z)&=\mathrm{Var}(d_{ik})+\mathrm{Var}(d_{ij})+2\mathrm{Cov}(d_{ij},d_{ik}) \\
  &\leq 2(\mathrm{Var}(d_{ik})+\mathrm{Var}(d_{ij}))\\
  &= 8d\|\mu_1 - \mu_2\|^2(\sigma_1^2 + \sigma_2^2) + 4d(\sigma_1^2 + \sigma_2^2)^2 + 16d\sigma_1^4.
\end{align*}  
To bound \(\mathbb{P}(Z > 0)\), apply the Cantelli inequality \cite{cantelli1929sui}
\[
\mathbb{P}(Z > 0) \geq 1 - \frac{\mathrm{Var}(Z)}{\mathrm{Var}(Z) + \mathbb{E}[Z]^2},
\]  
where the upper bound is a decreasing function with respect to \( \text{Var}(Z) \).
Substituting \(\mathbb{E}[Z]\) and \(\mathrm{Var}(Z)\)
\[
\mathbb{P}(Z > 0) \geq 1 - \frac{\mathcal{O}(d)}{\left[d\left(\sigma_2^2 - \sigma_1^2 + \|\mu_1 - \mu_2\|^2\right)\right]^2 + \mathcal{O}(d)}.
\]  
As \(d \to \infty\), the numerator scales as \(\mathcal{O}(d)\), while the denominator grows as \(\mathcal{O}(d^2)\). Thus
\[
1\ge \lim_{d \to \infty} \mathbb{P}(Z > 0) \geq 1 - \lim_{d \to \infty} \frac{\mathcal{O}(d)}{\mathcal{O}(d^2)} = 1.
\]  
This implies
\[
\lim_{d \to \infty} \mathbb{P}(Z > 0) = 1.
\]  
\end{proof}

Theorem \ref{thm:curse} demonstrates that when there are significant variance differences or mean discrepancies between clusters, although the absolute distance differences may not be pronounced, the relative magnitude of distances can stably distinguish heterogeneous samples from homogeneous ones. This discriminative capability strengthens progressively as the dimensionality increases. This mechanism overcomes the failure of traditional distance metrics in high-dimensional scenarios, where absolute distance measures typically lose their discriminative power.  

\begin{figure}[ht]
    \centering
\includegraphics[width=\linewidth]{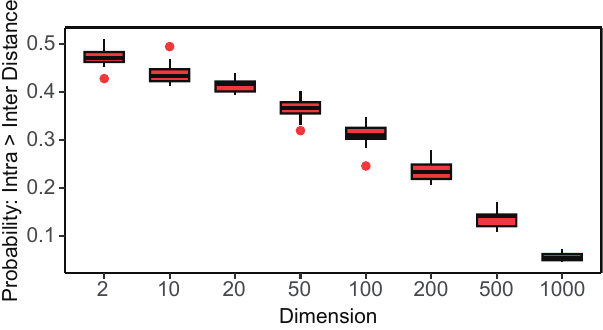} 
    \caption{Probability of intra-cluster Distance Exceeding inter-cluster Distance Based on 10 Repeated Trials.}
    \label{fig:curse}
\end{figure}

\begin{figure*}[htbp]
    \centering
\includegraphics[width=1\linewidth]{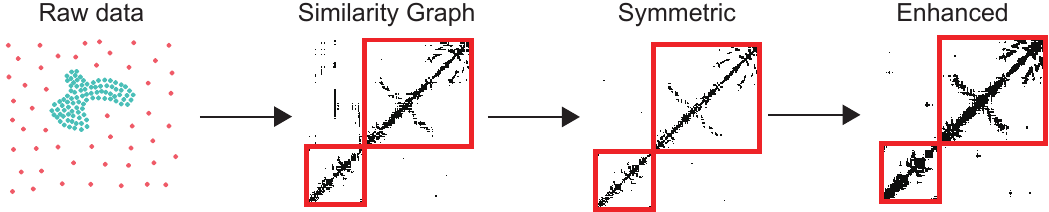} 
    \caption{The comparison of results from the three-step similarity graphs.}
    \label{fig:similarity_graph}
\end{figure*}

To validate the theoretical predictions of Theorem \ref{thm:curse}, we conducted a series of numerical experiments. In these experiments, we generated two clusters of Gaussian-distributed datasets, \(X_1\) and \(X_2\), both with zero means and standard deviations \(\sigma_1 = 1.0\) and \(\sigma_2 = 1.1\), respectively. For dimensions \(d \in \{2, 10, 20, 50, 100, 200, 500, 1000\}\), we computed the probability that the intra-cluster distance \(d_{ij}\) within \(X_1\) exceeds the inter-cluster distance \(d_{ik}\) between samples from \(X_1\) and \(X_2\). As shown in Figure \ref{fig:curse}, the experimental results demonstrate that as the dimension \(d\) increases, the probability of intra-cluster distances exceeding inter-cluster distances decreases significantly. This phenomenon is fully consistent with the theoretical prediction of Theorem \ref{thm:curse}. These results confirm the effectiveness of ordinal relationships in mitigating the curse of dimensionality. Specifically, by capturing relative ranking relationships, we can stably distinguish heterogeneous samples from homogeneous ones, and its discriminative capability progressively strengthens with increasing dimensionality.

To overcome the curse of dimension, we introduce an ordinal distance based on relative magnitude relationships \cite{li2022robust}. The core idea is to replace absolute distance comparisons with local ranking relationships. The ordinal distance between samples \(x_i\) and \(x_j\) is defined as the ranking position of \(x_j\) in the neighborhood of \(x_i\):  
\begin{align}  
\label{equ:def_order_dist}  
o(x_i; x_j) = \text{card}\left(\{k \mid D_{i,k} < D_{i,j}, 1 \leq k \leq n\}\right),  
\end{align}  
where \(\text{card}(\cdot)\) denotes the cardinality of a set.

The ordinal distance is also robust to noise. Specifically, the probability of changes in the ranking of distances is linearly related to the data dimensionality and the variance of the noise, as detailed in Theorem \ref{thm:noise}.
\begin{theorem}
\label{thm:noise}
Let the original data matrix be \( X = (x_1, \dots, x_n) \in \mathbb{R}^{d \times n} \) with euclidean distance matrix \( D \). The noise matrix \( E = (e_1, \dots, e_n) \) satisfies \( e_i \sim \mathcal{N}(0, \sigma^2 I_d) \) with independent \( e_i, e_j \). The perturbed data matrix \( X' = X + E \) with the euclidean distance matrix $D'$. Then for any neighboring pairs with \( D_{i,j}^2 < D_{i,k}^2 \), we have 
\[
P\left( (D'_{i,j})^2 > (D'_{i,k})^2 \right) \leq \frac{16d\sigma^2(D_{i,j}^2 + D_{i,k}^2)}{(D_{i,k}^2-D_{i,j}^2)^2} + \mathcal{O}(\sigma^3).
\]
\end{theorem}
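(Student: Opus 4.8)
The plan is to reduce the rank-flip probability to a one-sided tail bound for a single scalar and then apply Cantelli's inequality, mirroring the proof of Theorem~\ref{thm:curse}. Writing $\delta_{ij}=x_i-x_j$ and $\epsilon_{ij}=e_i-e_j$, I first expand
\[
(D'_{i,j})^2=\|\delta_{ij}+\epsilon_{ij}\|^2=D_{i,j}^2+2\,\delta_{ij}^{\top}\epsilon_{ij}+\|\epsilon_{ij}\|^2,
\]
and analogously for $(D'_{i,k})^2$, then set $W=(D'_{i,j})^2-(D'_{i,k})^2$, so that the event of interest is exactly $\{W>0\}$. Note that $\epsilon_{ij},\epsilon_{ik}\sim\mathcal{N}(0,2\sigma^2 I_d)$, and that the shared term $e_i$ makes them correlated with $\mathrm{Cov}(\epsilon_{ij},\epsilon_{ik})=\sigma^2 I_d$; this correlation must be tracked throughout.

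The second step is the moment computation, which is the crux. For the mean, the linear terms vanish and the two quadratic terms each contribute $\mathbb{E}\|\epsilon_{ij}\|^2=\mathbb{E}\|\epsilon_{ik}\|^2=2d\sigma^2$, which cancel, leaving $\mathbb{E}[W]=D_{i,j}^2-D_{i,k}^2<0$ by the hypothesis $D_{i,j}^2<D_{i,k}^2$. For the variance I would decompose $W$ into a linear-in-noise part $L=2\delta_{ij}^{\top}\epsilon_{ij}-2\delta_{ik}^{\top}\epsilon_{ik}$ and a quadratic part $Q=\|\epsilon_{ij}\|^2-\|\epsilon_{ik}\|^2$, noting that the cross term $\mathrm{Cov}(L,Q)$ vanishes as an odd Gaussian moment. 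Equivalently, since $(D'_{i,j})^2\sim 2\sigma^2\chi^2(d,\,D_{i,j}^2/2\sigma^2)$ is a noncentral chi-square, one obtains directly
\[
\mathrm{Var}\!\left[(D'_{i,j})^2\right]=8\sigma^2 D_{i,j}^2+8d\sigma^4,
\]
and likewise for the $(i,k)$ pair. The linear/$\sigma^2$ part carries no dimension factor, whereas the quadratic chi-square fluctuation contributes the $8d\sigma^4$ term.

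Using $\mathrm{Var}(U-V)\le 2(\mathrm{Var}(U)+\mathrm{Var}(V))$ to absorb the correlation (exactly the device used in Theorem~\ref{thm:curse}), I would bound $\mathrm{Var}(W)\le 16\sigma^2(D_{i,j}^2+D_{i,k}^2)+32d\sigma^4$, and then use $d\ge 1$ to write the leading constant uniformly as $16d\sigma^2(D_{i,j}^2+D_{i,k}^2)$. Cantelli's inequality applied to $W$ with $\mathbb{E}[W]<0$ then gives
\[
\mathbb{P}(W>0)=\mathbb{P}\!\left(W-\mathbb{E}[W]>-\mathbb{E}[W]\right)\le\frac{\mathrm{Var}(W)}{(\mathbb{E}[W])^2}=\frac{\mathrm{Var}(W)}{(D_{i,k}^2-D_{i,j}^2)^2},
\]
and substituting the variance estimate yields the stated leading term, with the $d\sigma^4$-type contributions collected into the remainder $\mathcal{O}(\sigma^3)$.

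I expect the main obstacle to be organizing the orders of $\sigma$ cleanly: the dimension $d$ enters only through the quadratic (chi-square) fluctuations of $\|\epsilon_{ij}\|^2$ and $\|\epsilon_{ik}\|^2$, which genuinely sit at order $\sigma^4$, so one must decide deliberately how to display the dimension dependence between the $\sigma^2$ leading term and the $\mathcal{O}(\sigma^3)$ remainder. A secondary technical point is discarding the inner-product term $\delta_{ij}^{\top}\delta_{ik}$ produced by the shared-noise correlation; bounding it via Cauchy--Schwarz together with $2|\delta_{ij}^{\top}\delta_{ik}|\le D_{i,j}^2+D_{i,k}^2$ ensures that only the symmetric combination $D_{i,j}^2+D_{i,k}^2$ survives in the numerator.
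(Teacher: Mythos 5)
Your proposal is correct and follows essentially the same route as the paper's own proof: expand the perturbed squared distances into a linear-in-noise plus quadratic-in-noise part, compute first and second moments, absorb the correlation from the shared $e_i$ via $\mathrm{Var}(U-V)\le 2\bigl(\mathrm{Var}(U)+\mathrm{Var}(V)\bigr)$, and finish with a Chebyshev/Cantelli bound of the form $\mathrm{Var}/\Delta_{ijk}^2$ where $\Delta_{ijk}=D_{i,k}^2-D_{i,j}^2$. One difference is worth flagging: your variance $\mathrm{Var}\bigl[(D'_{i,j})^2\bigr]=8\sigma^2 D_{i,j}^2+8d\sigma^4$ is the correct one, whereas the paper reports $8d\sigma^2\|x_i-x_j\|^2+\mathcal{O}(\sigma^3)$, the extra factor $d$ coming from an algebra slip in the step $\sum_{l} z_{ijl}^2\,\mathbb{E}[f_{ijl}^2]$, which equals $2\sigma^2\|x_i-x_j\|^2$ rather than $2d\sigma^2\|x_i-x_j\|^2$. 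Consequently, your explicit use of $d\ge 1$ to inflate $16\sigma^2(D_{i,j}^2+D_{i,k}^2)$ to the stated $16d\sigma^2(D_{i,j}^2+D_{i,k}^2)$ is precisely what is needed to reconcile the computation with the theorem as stated; in fact your argument establishes a slightly stronger, dimension-free leading term. Your side remark about controlling $\delta_{ij}^{\top}\delta_{ik}$ via Cauchy--Schwarz is unnecessary once the factor-2 variance inequality is invoked, but it is harmless and would also work if one computed $\mathrm{Var}(W)$ exactly instead.
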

\begin{proof}[Proof of Theorem \ref{thm:noise}]
Define the squared distance perturbation:
\[
\eta_{ij} := (D'_{i,j})^2 - D_{i,j}^2 = 2(x_i - x_j)^\top(e_i - e_j) + \|e_i - e_j\|^2.
\]
Let \( f_{ijl} = e_{i,l} - e_{j,l} \sim \mathcal{N}(0, 2\sigma^2) \) denote each component of \( e_i - e_j \). Then
\[
\mathbb{E}[\eta_{ij}] = \mathbb{E}\left[\sum_{l=1}^d f_{ijl}^2\right] = \sum_{l=1}^d 2\sigma^2 = 2d\sigma^2.
\]
For variance, expand \( \eta_{ij}^2 \) as
\begin{align*}
    &\eta_{ij}^2 = 4\left(\sum_{l=1}^d z_{ijl}f_{ijl}\right)^2 \\
    &+ \left(\sum_{l=1}^d f_{ijl}^2\right)^2 + 4\left(\sum_{l=1}^d z_{ijl}f_{ijl}\right)\left(\sum_{l=1}^d f_{ijl}^2\right),
\end{align*}
where \( z_{ijl} = x_{i,l} - x_{j,l} \).

Compute expectations term-wise, we have
\begin{align*}
&\mathbb{E}\left[\left(\sum z_{ijl}f_{ijl}\right)^2\right] = \sum_{l=1}^d z_{ijl}^2 \mathbb{E}[f_{ijl}^2] = 2d\sigma^2\|x_i - x_j\|_2^2, \\
&\mathbb{E}\left[\left(\sum f_{ijl}^2\right)^2\right] = 4d\sigma^4 + 8d^2\sigma^4, \\
&\mathbb{E}\left[\left(\sum z_{ijl}f_{ijl}\right)\left(\sum f_{ijl}^2\right)\right] = 0.
\end{align*}
Thus,
\[
\mathrm{Var}(\eta_{ij}) = 8d\sigma^2\|x_i - x_j\|_2^2 + \mathcal{O}(\sigma^3).
\]

The event \( (D'_{i,j})^2 > (D'_{i,k})^2 \) is equivalent to \( \eta_{i,j} - \eta_{i,k} > D_{i,k}^2-D_{i,j}^2=\Delta_{ijk} \). By Chebyshev's inequality, we have 
\begin{align*}
&P(\eta_{ij} - \eta_{ik} > \Delta_{ijk})
=P(|\eta_{ij}-\eta_{ik}|>\Delta_{ijk})\\
\leq &\frac{\mathrm{Var}(\eta_{ij} - \eta_{ik})}{\Delta_{ijk}^2} \\
= &\frac{16d\sigma^2(\|x_i - x_j\|_2^2 + \|x_i - x_k\|_2^2)}{\Delta_{ijk}^2}+\mathcal{O}(\sigma^3).
\end{align*}
\end{proof}

To ensure the symmetry of the ordinal distance matrix and to amplify the ordinal discrepancies both within and between cluters, we further define a symmetrized ordinal distance matrix \(O \in \mathbb{R}^{n \times n}\), whose elements satisfy
\begin{align}
\label{equ:symm}
    O_{i,j} = \max\{o(x_j; x_i), o(x_i; x_j)\}.
\end{align}

\subsection{Adaptive Neighborhood Identification and Similarity Graph Construction}
To enhance the consistency among samples of the same cluster and the distinction between samples of different clusters, we construct a similarity graph based on the ordinal distance matrix \(O \in \mathbb{R}^{n \times n}\), enabling nonlinear modeling of pairwise relationships. The similarity matrix \(A \in \mathbb{R}^{n \times n}\) is computed using a Gaussian-like kernel as 
\begin{align}
    A_{i,j} = \exp\left(-\frac{O_{i,j}^2}{\sigma_{i,j}^2}\right),
\end{align}
where \(\sigma_{i,j}\) denotes the adaptive neighborhood scale that controls the decay range of similarity.

To achieve finer modeling of regions with varying density, AMSME dynamically determines the neighborhood scale for each sample by analyzing potential gaps in the sorted entries of \(O\). First, we select the upper bound of the neighborhood size \(k\) based on the fact that, for \(n\) data points drawn independently and identically distributed (i.i.d.) from a density function with connected support, the \(k\)-nearest neighbor graph and the mutual \(k\)-nearest neighbor graph are connected if \(k\) is chosen on the order of \(\log(n)\) \cite{brito1997connectivity,von2007tutorial}. Assuming an approximately equal number of samples in each cluster, we set \(k\) as
\begin{align*}
    k = 2 \max\left(\lfloor \log\left(\tfrac{2n}{n_c}\right)\rfloor, 3\right),
\end{align*}
where \(n_c\) denotes the number of clusters.  

For each row \(O_{i,:}\) of the matrix \(O\), we extract the \(k\)-smallest values as a vector \(M^i\). We then compute the differences between consecutive elements of \(M_i\) as  
\begin{align}
\label{equ:F^i}
    F^i_{j} = M^i_{j+1} - M^i_{j}, \quad j = 1, \dots, k-1.
\end{align}
Next, we identify the maximum difference \(a^i\) and its corresponding index \(b^i\)
\begin{align*}
a^i = \max_j F^i_{j}, \quad b^i = \arg\max_j F^i_{j}.
\end{align*}

When a significant density gap is detected \(a^i > 1\), the local neighborhood size \(s^i\) for sample \(i\) is defined as
\begin{align*}
    s^i = \max\left(b^i, \tfrac{k}{2}-1\right).
\end{align*}
Otherwise, the neighborhood size is set to \(k-1\).  

Finally, based on the determined neighborhood size, the kernel bandwidth parameter for sample \(i\) with respect to sample \(j\) is defined as $\sigma_{i,j} = M^i_{s^j}$.

Our design of \(\sigma\) is based on the impact of sample density on ordinal distances. When both \(x_i\) and \(x_j\) are high-density samples located near the cluster center, their ordinal distances do not exhibit significant gaps. In this case, we set both \(\sigma_{i,j}\) and \(\sigma_{ji}\) to relatively large values, ensuring that all high-density samples within the cluster center are included in their confidence neighborhoods, thereby maintaining the tight connectivity of the cluster. Conversely, when \(x_i\) is a high-density sample near the cluster center and \(x_j\) is a low-density sample at the cluster boundary, the ordinal distances satisfy \(o(x_i;x_j) > o(x_j;x_i)\). After symmetrization, this results in a gap in the ordinal distances from \(x_j\) to other samples, while no such gap exists for \(x_i\). We set \(\sigma_{i,j}\) to a smaller value to prevent the high-similarity neighborhood of \(x_i\) from including boundary samples, while setting \(\sigma_{ji}\) to a larger value to ensure that the low-density sample \(x_j\) maintains sufficient similarity with other medium-density regions, avoiding the isolation of low-density regions. This design enables \(\sigma\) to dynamically adapt to changes in sample density, preserving fine-grained local structures in high-density regions while maintaining global connectivity between sparse regions and the cluster center.

Subsequently, we apply a symmetrization operation as
\begin{align*}
A \;\leftarrow\; \min\bigl(A,\,A^T\bigr).
\end{align*}
This symmetrization not only ensures intra-cluster connectivity but also effectively reduces inter-cluster connections, enhancing the representational capability of the similarity matrix. To further strengthen the similarity between samples within the same cluster, we introduce a secondary connection strategy to enhance the similarity further as 
\begin{align*}
    S = \min(1,A^2).
\end{align*}
The results of the three-step similarity graph construction on the Compounded dataset \footnote{http://cs.joensuu.fi/sipu/datasets/} are shown in Figure \ref{fig:similarity_graph}. Before symmetrization, the constructed similarity graph exhibits strong intra-cluster connections but introduces a small number of inter-cluster connections (primarily located in the upper-left corner). The symmetrization operation effectively removes inter-cluster connections while preserving intra-cluster connections. Finally, the secondary connection step further enhances intra-cluster connections with almost no involvement of inter-cluster connections.

\begin{algorithm}[t]
\SetAlgoLined
\KwIn{%
Distance Matrix $D \in \mathbb{R}^{n\times n}$, number of clusters $n_c$, constant $\alpha>1$}
\KwOut{%
First embedding $Y_1 \in \mathbb{R}^{k\times n}$ and Final embedding $Y_2 \in \mathbb{R}^{k\times n}$}

Construct ordinal distance matrix as \eqref{equ:def_order_dist} and symmetrization by \eqref{equ:symm} as $O$.\\
Let $k=2\cdot\max(\lfloor \log(2n/n_c)\rfloor,\,3)$. For each $i$, find the $k$ smallest values of $O_{i,:}$ as $M^i$, compute their differences $F^i$ as \eqref{equ:F^i}, and locate the largest gap $a^i$ with index $b^i$. Define 
\[
  s^i=
  \begin{cases}
    \max(b^i,k/2-1), & \text{if } a^i>1,\\
    k-1, & \text{otherwise},
  \end{cases}
\]
and set $\sigma_{i,j}=M^i_{s^j}$.\\
Form similarity $A_{i,j}=\exp[-O_{i,j}^2/\sigma_{i,j}^2]$ and symmetrize $A \leftarrow \min(A,A^T)$ 
and further enhanced to $S = \min(1,A^2)$.\\
 Run UMAP on $D^O=1-S$ to obtain an initial embedding $Y_1$.\\
Cluster $Y_1$ (e.g., K-means) to get labels $\ell_1$.\\
Adjust $D$ into $D^M$ by normalizing intra-cluster distances within $[0,1]$, and assigning $D^M=\alpha$ for inter-cluster pairs by label $\ell_1$.\\
Run UMAP again on $D^M$ to produce final embedding $Y_2$.

\caption{\textbf{Adaptive Multi-Scale Manifold Embedding (AMSME)}}
\label{alg:AMSME}
\end{algorithm}

\subsection{First Embedding and Clustering}
To further preserve local neighborhood information, enhance the separation of dissimilar samples, and mitigate the impact of noise, \(D^O = 1-S\) is used as the input to the embedding algorithm to better capture the nonlinear structure of high-dimensional data, yielding an intermediate low-dimensional layout $Y_1 = \mathcal{F}(D^O)\in\mathbb{R}^{2\times n}$.

Through the coupling of non-linear steps that enhance local structures, the embedding result \(Y_1\) clusters similar samples while forming clear boundaries between classes. At this stage, a clustering algorithm (e.g. K-means) can be applied to label each sample, generating pseudo-label \(\ell_1\) as label based on the first embedding result. The discovered labels become a guiding signal for emphasizing cluster boundaries in the subsequent visualization. 
\subsection{Label-Driven Reweighting and Final Embedding}
To enhance cluster separability while preserving local topology structures, AMSME modifies the original distance matrix \(D\) based on pseudo-label \(\ell_1\). If \(c_i\) denotes the set of samples belonging to the \(i\)-th cluster, the intra-cluster distances are normalized to the range \([0,1]\):  
\[
D^M_{c_i,c_i} = 
\frac{D_{c_i,c_i}}
{\max\bigl(D_{c_i,c_i}\bigr)}.
\]  
For samples belonging to different clusters, a large constant \(\alpha \in [1,\infty]\) (defaulting to \(2\)) is assigned to explicitly separate these groups. Finally, \(D^M\) is fed into embedding algorithm again to obtain the final embedding \(Y_2 = \mathcal{F}(D^M)\).

AMSME generates two distinct embedding results, denoted as AMSME-S1 (\(Y_1\)) and AMSME-S2 (\(Y_2\)), each tailored to address specific embedding objectives. The primary objective of AMSME-S1 is to produce clustering results that align with the real labels, ensuring that samples within the same cluster are tightly grouped while clearly delineating the internal structure of each cluster. This approach effectively captures subtle variations among samples within the same cluster. Although the distances between different clusters may remain relatively small, AMSME-S1 successfully preserves the relative proximity between clusters. In contrast, AMSME-S2 prioritizes achieving clear separation between distinct clusters, thereby accentuating the independence of each cluster. This dual focus allows AMSME to balance global inter-cluster relationships with local intra-cluster structures, offering a comprehensive and multifaceted framework for the analysis and interpretation of high-dimensional data.

\begin{figure*}[htbp]
    \centering
\includegraphics[width=\textwidth]{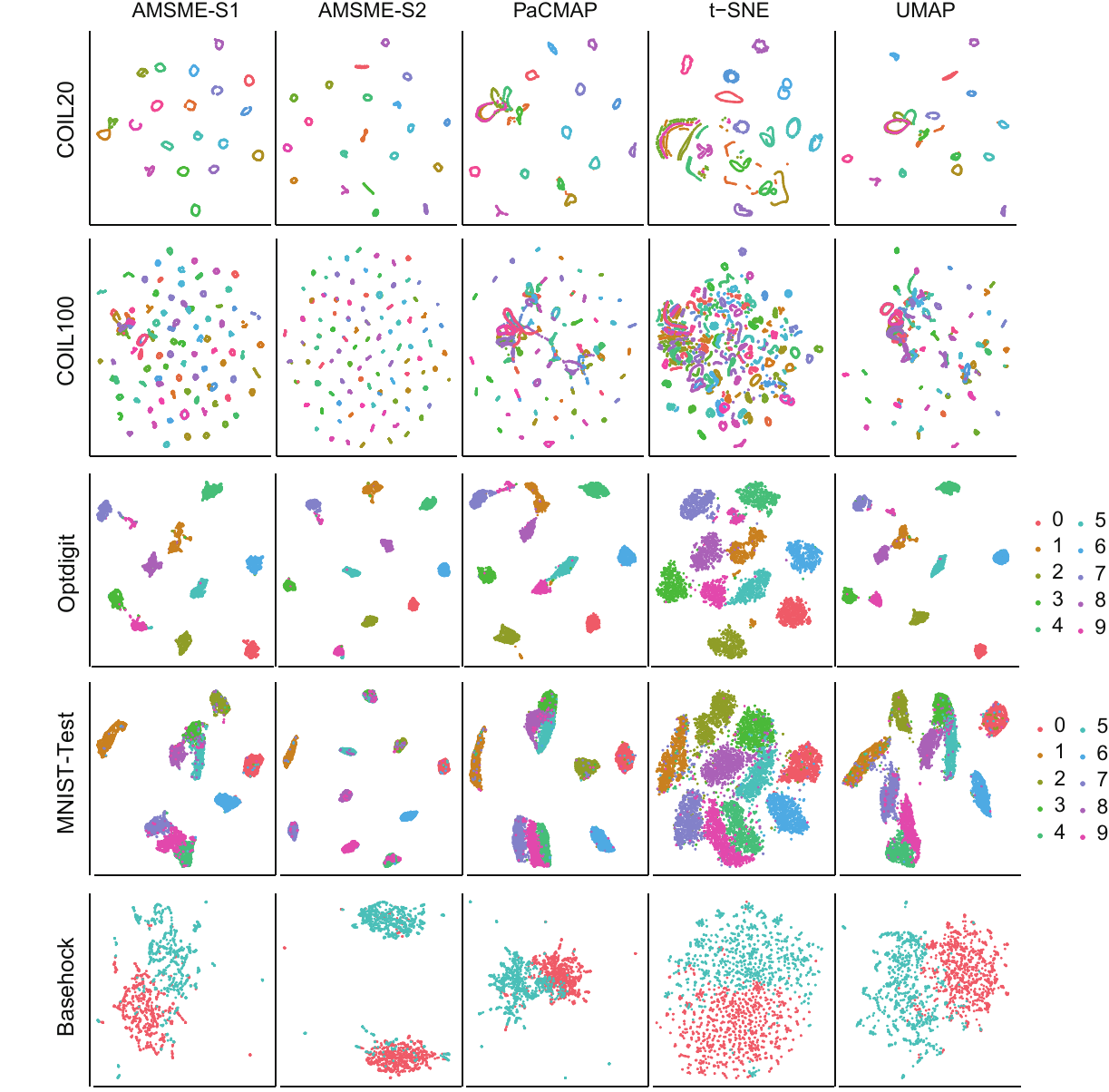} 
    \caption{Manifold embedding results for five datasets using five methods.}
    \label{fig:embedding}
\end{figure*}

\section{Experimental Analysis and Results}
\label{sec:experiments}
In our experiments, we evaluated our algorithm on several benchmark datasets, encompassing diverse image datasets (such as COIL20 \cite{nene1996columbia}, COIL100 \cite{nene1996columbia}, Optdigit\footnote{https://archive.ics.uci.edu/dataset}, and MNIST-Test \cite{lecun1998mnist}) and the text dataset Basehock\footnote{http://qwone.com/~jason/20Newsgroups/}. Detailed descriptions of these datasets are provided in Table \ref{tab:datasets}. To assess the effectiveness of AMSME, we conducted a comparative analysis with several widely used state-of-the-art embedding algorithms, including t-SNE \cite{van2008visualizing}, UMAP \cite{mcinnes2018umap}, and PACMAP \cite{wang2021understanding}.

\begin{table}[ht]
\centering
\caption{Summary of benchmark datasets.}
\label{tab:datasets}
\begin{tabular}{lccc}
\hline
\textbf{Dataset} & \textbf{\#Samples} & \textbf{\#Features} & \textbf{\#Classes} \\
\hline
COIL20  & 1,440 & 1,024 & 20 \\
COIL100  & 7,200 & 1,024 & 100 \\
Optdigit & 5620 & 64 & 10\\
MNIST-Test & 10,000 & 784 & 10 \\
Basehock  & 1,993 & 4862 & 2\\
\hline
\end{tabular}
\end{table}
\subsection{Experimental Setting}
In our experiments, all comparative algorithms were executed using their default parameter settings. For the t-SNE algorithm, the perplexity parameter was set to 30. For the UMAP algorithm, the default neighborhood size was set to 15, and the minimum distance was set to 0.1. The PaCMAP algorithm utilized its default neighborhood settings. For distance computation, cosine similarity was applied to the Basehock dataset, while Euclidean distance was employed for all other datasets. To ensure the consistency of the embedding results, all methods utilized Principal Component Analysis (PCA) to reduce the original data to two dimensions as the initial embedding. Additionally, a fixed random seed was used to guarantee the reproducibility of the experiments, and the dimensionality of the manifold embedding was set to 2 for visualizing the differences between the results of these algorithms.

\begin{figure*}[t]
    \centering
\includegraphics[width=\textwidth]{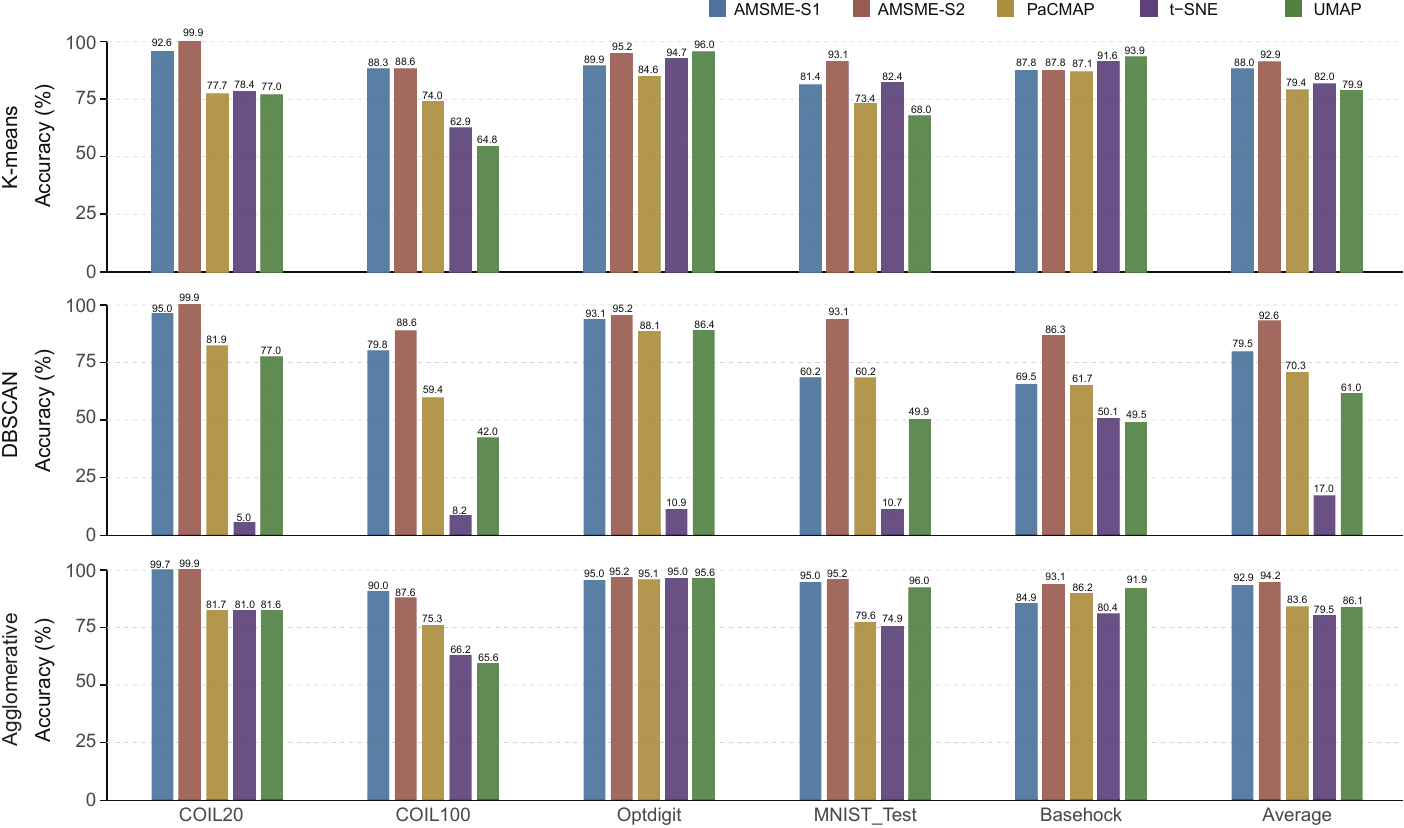} 
    \caption{Bar chart of ACC results for three clustering algorithms applied to visualization results on five datasets.}
    \label{fig:ACC}
\end{figure*}

\subsection{Comparison Results}
We first present the visualization results for the five datasets, as shown in Figure \ref{fig:embedding}. On the COIL20 and COIL100 datasets, each class consists of images of the same object captured from 72 different angles. The ideal visualization shape should be circular or figure-8 (reflecting the symmetric structure of the object). On COIL20, AMSME successfully preserves the topological structure within classes and the separation between classes, demonstrating its ability to accurately capture intra-cluster structures. In contrast, PaCMAP and UMAP exhibit overlapping between multiple classes, while t-SNE fails to recognize the intra-cluster topological structure. On COIL100, compared to the competing algorithms, AMSME shows significantly fewer instances of inter-cluster crossing. On the handwritten digit datasets Optdigit and MNIST-Test, AMSME-S2 successfully separates all digits, while AMSME-S1 identifies similarities between digits 4, 7, and 9, as well as between digits 3, 5, and 8 in the MNIST-Test dataset. This indicates that AMSME-S1 recognizes similarities between different clusters while maintaining clear boundaries between them, enabling AMSME-S2 to achieve complete cluster separation. On the text dataset Basehock, AMSME-S1 exhibits distinct boundaries between the two clusters.

To further validate the superiority of AMSME in inter-cluster separation performance, we employed three clustering algorithms—K-means \cite{lloyd1982least}, DBSCAN \cite{kriegel2011density}, and hierarchical clustering \cite{nielsen2016hierarchical}—and conducted a quantitative analysis using clustering accuracy (ACC) as the evaluation metric. Clustering Accuracy (ACC) is a metric used to evaluate the performance of clustering algorithms. It measures the extent to which the clusters produced by the algorithm match the ground truth labels of the data. ACC is typically calculated as the ratio of correctly assigned data points to the total number of data points, expressed as a percentage. The experimental results (as shown in Figure \ref{fig:ACC}) demonstrate that both stages of AMSME exhibit significant performance advantages across all five benchmark datasets. Specifically, under the K-means framework, the two stages of AMSME achieved 6\% and 10.9\% improvements in clustering accuracy compared to the second-best method, respectively. Particularly on the COIL20 and COIL100 datasets, the visualization results of AMSME displayed optimal inter-cluster separation. Notably, on the COIL20 dataset, the second stage of AMSME nearly achieved perfect clustering classification, which can be attributed to the reliable manifold embedding and clear cluster boundaries provided by the first stage. On the Optdigit and MNIST-Test datasets, the second stage of AMSME also performed exceptionally well, with clustering accuracy exceeding 93\%. Although AMSME's performance on the Basehock dataset under the K-means algorithm was slightly inferior to the best method, the gap was controlled within 6\%, still demonstrating strong competitiveness. It is worth noting that in the DBSCAN method, AMSME performed particularly well, while t-SNE completely failed. This phenomenon is primarily due to the fact that t-SNE's visualization results typically exhibit point cloud structures with uniform density, making it difficult to accurately delineate cluster boundaries based on density differences. In contrast, AMSME effectively addresses this issue through its unique similarity graph construction mechanism, further confirming its significant advantages in handling complex data structures.


\begin{figure*}[htbp]
    \centering
\includegraphics[width=\textwidth]{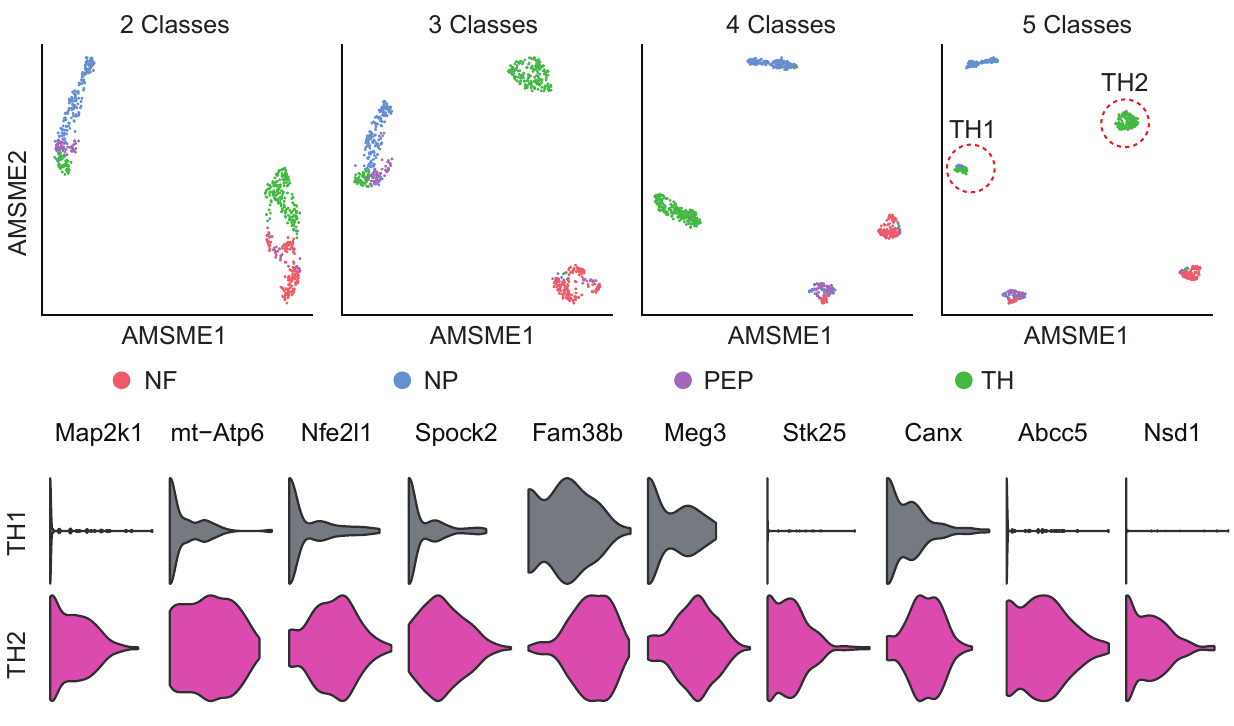} 
    \caption{Multi-resolution Analysis of GSE59739. The upper figure illustrates the visualization results of the second stage in AMSME, with the number of clusters set to 2, 3, 4, and 5, respectively (from left to right).
The lower figure presents violin plots of marker genes between the two subtypes of TH neurons.}
    \label{fig:Gene}
\end{figure*}

\subsection{Multi-resolution Analysis of Single-cell RNA Data}
AMSME demonstrates its multi-scale resolution capability on biological data by adjusting the number of clusters. We applied AMSME to the single-cell RNA sequencing dataset GSE59739 \cite{2015Unbiased} from the mouse lumbar dorsal root ganglion (DRG), which comprises four neuronal subtypes: neurofilament-enriched neurons (NF), neuropeptidergic neurons (NP), peptidergic nociceptors (PEP), and tyrosine hydroxylase-positive neurons (TH). The raw data were obtained from the GEO database and preprocessed using the Scanpy pipeline, including gene expression filtering, data normalization, and highly variable gene selection.

We systematically adjusted the clustering number from 2 to 5 to evaluate its hierarchical identification performance. When \(k=2\), the algorithm merged NF and TH into one cluster and NP and PEP into another, reflecting the macro-level functional division between sensory neurons (NF/TH) and nociceptive neurons (NP/PEP) in the DRG (Figure \ref{fig:Gene}). Increasing \(k\) to 3 successfully separated NF and TH due to their significant gene expression differences, while NP and PEP remained clustered (Figure \ref{fig:Gene}). Further setting \(k=4\), the algorithm accurately identified NF, NP, and TH neurons, although partial overlap persisted within the PEP subtype (Figure \ref{fig:Gene}).

When \(k=5\), AMSME first identified two functional subtypes of TH neurons, TH1 and TH2. The p-value for the Wilcoxon hypothesis test was 0, indicating significant transcriptional differences between the two subtypes. Differential expression analysis further identified 10 marker genes (Figure \ref{fig:Gene}), all of which were significantly upregulated in the TH2 subtype, highlighting distinct molecular profiles between TH1 and TH2.

The gene Map2k1 phosphorylates and activates ERK1 and ERK2 \cite{zhang2019identification}, which are essential for neuronal proliferation, survival, and neurogenesis \cite{sahu2021inhibition}. MT-ATP6 provides crucial information for the synthesis of proteins vital to mitochondrial function. Fam38b mediates in vivo calcium signaling in trigeminal ganglion neurons and electrophysiological signals in spinal dorsal horn neurons in response to non-noxious stimuli \cite{murthy2023deciphering}. GBF1 is involved in regulating COPI complex-mediated retrograde vesicular transport between the endoplasmic reticulum and the Golgi apparatus \cite{torii2024myelination}.

These findings suggest that TH2 neurons exhibit higher activity, stronger neuronal proliferation and survival capabilities, and enhanced synaptic transmission and protein synthesis, all contributing to the more efficient signaling capabilities of this subtype.

\section{Conclusion}
\label{sec:conclusion}
In this study, we propose Adaptive Multi-Scale Manifold Embedding (AMSME), a robust two-stage embedding framework designed to address the limitations of traditional manifold embedding methods. By introducing ordinal-based distances, AMSME theoretically overcomes the shortcomings of traditional distance metrics, which are prone to the curse of dimensionality in high-dimensional spaces and exhibit low inter-cluster discriminability. Additionally, the adaptive local neighborhood selection mechanism enables AMSME to simultaneously preserve both local and global structures across points with varying densities, enhancing cluster separability and ensuring robustness against data noise and heterogeneity. The two-phase embedding process provides distinct results: one focusing on intra-cluster structure and the other emphasizing inter-cluster separation.

Experimental results on real-world datasets demonstrate that AMSME outperforms state-of-the-art methods, including t-SNE, UMAP, and PaCMAP, in terms of both inter-cluster separation and intra-cluster local structure preservation. Specifically, the two stage of AMSME improve clustering accuracy by 6\% and 10.9\%, respectively, and show significant advantages in high-neighborhood-size KNN classification algorithms. Furthermore, AMSME exhibits multi-resolution analysis capabilities, identifying novel subtypes in scRNA-seq datasets and revealing their biological differences. These strengths and functionalities highlight AMSME's potential in practical applications such as social network analysis, where it effectively detects community structures and their dynamic changes.

However, AMSME still has some limitations. For instance, its computational complexity may require further optimization for ultra-large-scale datasets. Future work will focus on the following aspects: first, designing distributed algorithms to support large-scale computations; and second, extending AMSME to dynamic or streaming data scenarios to enable real-time data analysis. We believe that AMSME, as a versatile tool for high-dimensional data analysis and visualization, will play an increasingly important role in a wide range of applications.

\ifCLASSOPTIONcaptionsoff
  \newpage
\fi
\bibliographystyle{IEEEtran}
\bibliography{reference}%
\end{document}